%% file: main-arxiv.tex
\documentclass{article}
\usepackage[utf8]{inputenc}

\title{Improved Upper Bounds for Slicing the Hypercube}

\usepackage{microtype}
\usepackage{amsmath}
\usepackage{amssymb}
\usepackage{amsthm}
\usepackage{enumitem}
\usepackage{graphicx}
\usepackage{multicol}
\usepackage{booktabs}
\usepackage{xcolor}
\usepackage{hyperref}
\usepackage{cleveref}
\usepackage{algorithm, algorithmicx, algpseudocode}  
\usepackage{makecell}
\usepackage{authblk}
\usepackage{xurl}
\usepackage{natbib}
\usepackage{mathtools}
\usepackage{placeins}
\newtheorem{definition}{Definition}
\newtheorem{theorem}{Theorem}

\newtheorem{corollary}{Corollary}

\DeclareMathOperator*{\argmax}{argmax}

\makeatletter
\renewcommand\AB@affilsepx{~ \protect\Affilfont}
\makeatother
\newcommand{\primarycontrib}{$^{*}$}
\author[1]{Duncan Soiffer\thanks{Equal contribution. Corresponding authors: Duncan Soiffer (\href{mailto:dsoiffer@cs.cmu.edu}{dsoiffer@cs.cmu.edu}), Nathaniel Itty (\href{mailto:nathanielitty@gmail.com}{nathanielitty@gmail.com}), Christopher D. Rosin (\href{mailto:christopher.rosin@gmail.com}{christopher.rosin@gmail.com}), Daniel Reichman (\href{mailto:dreichman@wpi.edu}{dreichman@wpi.edu})}}
\author[2]{Nathaniel Itty\primarycontrib}
\author[4]{Christopher D. Rosin\primarycontrib}
\author[2]{Blake Bruell}
\author[2]{Mason DiCicco}
\author[3]{G\'abor N. S\'ark\"ozy}
\author[2]{Ryan Offstein}
\author[3]{Daniel Reichman}
\affil[1]{Carnegie Mellon University}
\affil[2]{Work done while at Worcester Polytechnic Institute}
\affil[3]{Worcester Polytechnic Institute}
\affil[4]{Constructive Codes}
\renewcommand\Affilfont{\small}

\begin{document}

\date{February 20, 2026}
\maketitle

\begin{abstract}
A collection of hyperplanes $\mathcal{H}$ slices all edges of the $n$-dimensional hypercube $Q_n$ with vertex set $\{-1,1\}^n$ if, for every edge $e$ in the hypercube, there exists a hyperplane in $\mathcal{H}$ intersecting $e$ in its interior. Let $S(n)$ be the minimum number of hyperplanes needed to slice $Q_n$. We prove that
$S(n) \leq \lceil \frac{4n}{5} \rceil$, except when $n$ is an odd multiple of $5$, in which case $S(n) \leq \frac{4n}{5} + 1$. This improves upon the previously known upper bound of $S(n) \leq \lceil\frac{5n}{6}\rceil$ due to Paterson reported in 1971. We also obtain new lower bounds on the maximum number of edges in $Q_n$ that can be sliced using $k < n$ hyperplanes. We prove the improved upper bound on $S(n)$ by constructing $8$ hyperplanes slicing $Q_{10}$, aided by the recently introduced CPro1: an automatic tool
that uses reasoning LLMs coupled with automated hyperparameter tuning to create search algorithms for the discovery of mathematical constructions.
\end{abstract}

\input{sections-arxiv/introduction}

\input{sections-arxiv/preliminaries-and-results}
\input{sections-arxiv/related-work}

\input{sections-arxiv/algorithm}
\input{sections-arxiv/conclusion}
\input{sections-arxiv/acknowledgements}

\bibliographystyle{plain}
\bibliography{reference} 

\newpage
\appendix
\input{sections-arxiv/appendix}

\end{document}

%% file: sections-arxiv/introduction.tex
\section{Introduction}
Finding the value of $S(n)$, the number of hyperplanes required to slice all edges of the $n$-dimensional hypercube, has been recognized as a major open problem in discrete geometry with 
%application to other fields, such as 
applications in other areas, such as
combinatorics~\cite{sauermann2025improved,alon2002balancing}, perceptrons~\cite{o1971hyperplane}, and discrete neural networks (threshold circuits). For example, Paturi and Saks have shown that $S(n)$ is related to the design of efficient threshold circuits computing the parity function~\cite{paturi1990threshold}. 
%in circuit complexity, threshold circuits are studied as a model of the computational capabilities of neural networks, where Paturi and Saks show that in any depth-2 circuit for parity, the set of hyperplanes associated with the threshold gates at the first level of the circuit must intersect every edge of the $n$-dimensional hypercube \cite{paturi1990threshold,saks1993slicing}.

Despite significant attention, there is still a large gap between upper and lower bounds for $S(n)$. While recent progress on the lower bound has been made, with the best known lower bound to date~\cite{sauermann2025improved} as $\Omega(n^{13/19}\log^{-32/12}n)$, the upper bound has not been improved in more than fifty years.
It is easy to see $S(n) \leq n$; for example, take $n$ axis-parallel hyperplanes $x_i=0$. Although it may seem that $S(n)=n$, a construction due to Paterson reported by O'Neil~\cite{o1971hyperplane} in 1971 established that $S(n) \leq \lceil\frac{5n}{6} \rceil$, which has remained the best known upper bound for $S(n)$.  

To approach this problem, we develop local search algorithms to find collections of $k$ hyperplanes that slice all edges of the $n$-dimensional hypercube $Q_n$ for values of $k<\lceil\frac{5n}{6}\rceil$. We also construct smaller collections of hyperplanes which slice many edges of the $n$-dimensional hypercube for various values of $n$ and $k$. To create these algorithms, we utilize the recently introduced CPro1 \cite{rosin2025neurips}, a large language model (LLM)-driven system for the automated creation of search algorithms for combinatorial design problems. Supporting this, we formalize the concept of the reduced hypercube, which projects the $n$-dimensional hypercube onto an $\ell$-dimensional lattice corresponding to a composition of $n$ hyperplane coefficients into $\ell$ equal groups. Enforcing such a composition drastically reduces the search space and enables more efficient edge-slicing calculation, and reflects a naturally occurring pattern in the collections of hyperplanes which slice many edges of $Q_n$ we observed. We leverage this concept and CPro1's ability to produce a high volume of candidate algorithms, and draw insights by observing partial solution features in order to manually design a final algorithm which obtains a solution that slices $Q_{10}$ with $8$ hyperplanes.

% The remarkably structured nature of the discovered solutions led us to formalize the concept of the reduced hypercube. %, which may help drive further understanding of the hypercube slicing problem.
% We define this construction by enforcing a composition of the $n$ dimensions into $\ell$ groups of equal coefficients. Under this constraint, we can determine whether an edge is sliced by a hyperplane using the aggregate sum of coordinates within each group. This grouping projects the $n$-dimensional hypercube onto an $\ell$-dimensional integer lattice, drastically reducing the size of the search space.

This construction improves the decades-old upper bound and suggests further improvements are within reach. 
Further, we believe the remarkably structured nature of the discovered solutions is of interest, and may help drive further understanding of the hypercube slicing problem.
%The use of CPro1, which is open source and freely available, in the new constructions suggests it also has potential use for future constructions of mathematical significance. % This is more like something to put in the conclusion I feel
Beyond the contribution to the slicing problem, our new bounds may be useful for evaluating computer programs directed toward finding mathematical constructions, and our process of arriving at these bounds illustrates the importance of incorporating human oversight even in high-volume AI-driven research and discovery systems. The use of LLMs was limited to program generation during the high-volume search process, results from which exposed patterns that were interpreted and synthesized by human reviewers, and they were not involved in mathematical derivations, experiment design and orchestration, or writing. The tools used in this work are open source, and we make our solutions, experimental configurations, and source code publicly accessible at \url{https://github.com/DSoiffer/upper-bounds-for-hypercube-slicing}, providing a foundation for future work on this problem and program-oriented automated mathematical discovery.

% Moved this to the declarations
% Code, experimental configurations, and supporting documentation is available at \href{https://github.com/DSoiffer/upper-bounds-for-hypercube-slicing}{github.com/DSoiffer/upper-bounds-for-hypercube-slicing}. 

%...and our [process?] illustrates the importance of including humans at multiple points throughout the automated discovery process.

% maybe add something to the effect "we hope the remarkably structured nature of the discovered solutions will help shed light on..."

%% file: sections-arxiv/preliminaries-and-results.tex
\section{Preliminaries and main result}
\label{sec:preliminaries}
The $n$-hypercube $Q_n=(V_n,E_n)$ is an undirected graph with $V_n=\{-1,1\}^n$ and $E_n$ the set of all pairs of vertices from $V_n$ differing by exactly one coordinate ($|V_n|=2^n, |E_n|=n2^{n-1}$). 
We identify each edge $(v_1,v_2) \in E_n$ with the line segment in $\mathbb{R}^n$ connecting $v_1$ to $v_2$.

A \emph{hyperplane} $H$ in $\mathbb{R}^n$ is the set of all solutions to the linear equation $\langle a,x\rangle=b$ for $a \in \mathbb{R}^n$ and $b \in \mathbb{R}$. Namely, $H=\{x \in \mathbb{R}^n \mid \langle a,x \rangle=b\}$. 
We say that a hyperplane $H$ \emph{slices} an edge $(v_1,v_2)$ of $Q_n$ if it intersects the line segment connecting $v_1$ to $v_2$ in its interior. In other words, $\langle a,v_1\rangle-b$ and $\langle a,v_2\rangle-b$ have opposite signs: $(\langle a,v_1\rangle-b)\cdot(\langle a,v_2\rangle-b) <0$. We call a collection of hyperplanes $\mathcal{H}$ a \emph{full solution} when every edge in $Q_n$ is sliced by at least one hyperplane in $\mathcal{H}$; otherwise, we call $\mathcal{H}$ a \emph{partial solution}.

Let $S(n)$ be the minimum number that ensures the existence of a collection $\mathcal{H}$ of $S(n)$ hyperplanes such that $\mathcal{H}$ is a full solution.  
% Let $S(n)$ be the minimum number that ensures the existence of a collection $\mathcal{H}$ of $S(n)$ hyperplanes such that every edge in $Q_n$ is sliced by at least one hyperplane in $\mathcal{H}$.  
%It is easy to see $S(n) \leq n$, for example, take $n$ axis-parallel hyperplanes $x_i=0$. Although it may seem that $S(n)=n$, a construction due to Paterson reported in O'Neil~\cite{o1971hyperplane} established that $S(n) \leq \lceil\frac{5n}{6} \rceil$, which has remained the best known upper bound for $S(n)$.  Our main contribution is improving upon this upper bound: 
Our main contribution is improving on the $S(n) \leq \lceil\frac{5n}{6} \rceil$ bound established by Paterson \cite{o1971hyperplane}:

\begin{theorem}\label{main-theorem}
For every $n$ such that $n$ is not an odd multiple of $5$, $S(n) \leq \lceil\frac{4n}{5} \rceil$. Otherwise, $S(n) \leq \frac{4n}{5}+1$.
\end{theorem}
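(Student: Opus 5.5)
The plan combines three ingredients: a computer-verified base construction, subadditivity of $S$, and a small set of low-dimensional bounds that handle the remainders.

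\textbf{Base case.} I take as given an explicit set of $8$ hyperplanes slicing $Q_{10}$, found by the search procedure and verified exhaustively. The check is finite: for each of the $10\cdot 2^9$ edges, confirm that some hyperplane's linear form changes sign across the edge. This gives $S(10)\le 8$.

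\textbf{Subadditivity.} Next I prove $S(a+b)\le S(a)+S(b)$. Write $\mathbb{R}^{a+b}=\mathbb{R}^a\times\mathbb{R}^b$. Lift each hyperplane $\langle \alpha,x\rangle=\beta$ slicing $Q_a$ to $\langle(\alpha,0),(x,y)\rangle=\beta$, and do the same for the hyperplanes slicing $Q_b$. An edge of $Q_{a+b}$ changes exactly one coordinate. If that coordinate lies in the first block, the edge projects to an edge of $Q_a$, and the lifted hyperplane that slices the projection also slices the original edge, since the sign of the form depends only on $x$. The second block is symmetric.

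\textbf{Assembling the bound.} Write $n=10q+r$ with $0\le r<10$. Subadditivity gives $S(n)\le 8q+S(r)$. For $r\in\{0,1,2,3,4\}$ I use the trivial bound $S(r)\le r$, and I need $8q+r\le\lceil 4n/5\rceil=8q+\lceil 4r/5\rceil$.
\begin{itemize}
\item This holds for $r=0$ and $r=1$, where $\lceil 4r/5\rceil=r$.
\item It fails for $r=2,3,4$, where $\lceil 4r/5\rceil=2,3,4$ again equals $r$. Checking: $\lceil 8/5\rceil=2$, $\lceil 12/5\rceil=3$, $\lceil 16/5\rceil=4$. So $r=2,3,4$ are fine after all.
\item For $r=5$ the target is $\lceil 4n/5\rceil=8q+4$, while the trivial bound gives only $8q+5$. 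This is exactly the case where $n$ is an odd multiple of $5$, and the weaker bound $4n/5+1$ is what the theorem claims there.
\end{itemize}
For $r\in\{6,7,8,9\}$ I need $S(r)\le\lceil 4r/5\rceil$, which equals $5,6,7,8$ respectively. Paterson's bound gives $S(6)\le 5$, and subadditivity then gives $S(7)\le 6$ and $S(8)\le 7$. For $r=9$ I use $9=6+3$, so $S(9)\le 5+3=8$.

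\textbf{Main obstacle.} Everything except the base case is routine bookkeeping. The real difficulty is producing, and certifying, the $8$-hyperplane slicing of $Q_{10}$. The bound $S(10)\le 8$ beats the $\lceil 50/6\rceil=9$ that Paterson's method gives, so it cannot come from combining smaller cases. I would present the hyperplanes with integer coefficients, ideally in the grouped form suggested by the reduced hypercube, so that slicing can be verified on the smaller lattice as well as by brute force.
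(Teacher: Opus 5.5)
Your reduction is correct and is the same as the paper's. It uses subadditivity, Paterson's $S(6)\le 5$ for the residues $6,\dots,9$, and the trivial bound $S(r)\le r$ for $r\le 5$, and it observes that $r=5$ is exactly the odd-multiple-of-$5$ case. Writing $n=10q+r$ with $\lceil 4n/5\rceil=8q+\lceil 4r/5\rceil$ is tidier than the paper's range-by-range split, which ends with ``analogous'' for $n\ge 20$. You also rightly note that only $S(r)\le r$ is needed, not the exact value $S(r)=r$ the paper cites. Do fix the bullet that first says the inequality fails for $r=2,3,4$ and then retracts it: $\lceil 4r/5\rceil=r$ for $1\le r\le 4$ simply because $r/5<1$.

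The genuine gap is the base case. As written, you prove only that $S(10)\le 8$ implies the theorem, which is exactly the first half of the paper's proof. The existence of eight hyperplanes slicing $Q_{10}$ is ``taken as given,'' yet it is the entire new content: without it, your bookkeeping returns only Paterson's $\lceil 5n/6\rceil$. A proof has to exhibit the certificate, and the paper does so. Every hyperplane has coefficient $-2$ on each of $x_1,\dots,x_6$ and right-hand side $\tfrac12$. The coefficients on $(x_7,x_8,x_9,x_{10})$ are $\pm(1,3,-8,-1)$, $\pm(-1,8,3,-1)$, $\pm(4,-1,1,-7)$ and $\pm(7,1,1,4)$. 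This can be checked by brute force on all $5120$ edges. Alternatively, as you anticipated, it can be checked on the $320$ edges of the $[6,1,1,1,1]$ reduced hypercube, since each sign depends only on $x_1+\cdots+x_6$ and on $x_7,\dots,x_{10}$. One side remark is also off: your claim that $S(10)\le 8$ cannot come from smaller cases holds only relative to current knowledge. Whether $S(7)=5$ is open, and that would give $S(10)\le S(7)+S(3)\le 8$ directly.
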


\begin{proof}
It is well known (e.g., \cite{saks1993slicing}) that \(S(n)\) is subadditive: for any \(k,\ell \in \mathbb{N}\),
\[
S(k+\ell) \leq S(k) + S(\ell).
\]
We claim that if the hypercube \(Q_{10}\) can be sliced using \(8\) hyperplanes (that is, \(S(10) \leq 8\)), this implies that
\[
S(n) \leq \left\lceil \frac{4n}{5} \right\rceil
\]
for every $n$, except when $n$ is an odd multiple of $5$ where it implies $S(n)\leq \frac{4n}{5} +1$. We proceed by cases.

For $n \leq 5$, it is known \cite{newbounds} that $S(n)=n$.

For $6 \leq n < 10$, subadditivity together with the known bound $S(6) \leq 5$ implies that $S(n) \leq n - 1$, which is equivalent to $S(n) \leq \left\lceil \frac{4n}{5} \right\rceil$ in this range.

For $10 \leq n \leq 15$, the bounds follow directly from subadditivity and the assumption \(S(10) \leq 8\).

For $16 \leq n < 20$, the bound is obtained from the inequality $S(16 + j) \leq S(10) + S(6) + j$ for $j \in \{0,\dots,3\}$.

Finally, for $n \geq 20$, the inequalities follow by repeated applications of subadditivity in a manner analogous to the cases for $10 \leq n < 20$.

To complete the proof, we present $8$ hyperplanes that slice $Q_{10}$. These are listed below, where each row corresponds to the vector of coefficients of a hyperplane, with the final entry representing the bias term.
 For example, the first hyperplane is $-2x_1-2x_2-2x_3-2x_4-2x_5-2x_6+1x_7+3x_8-8x_9-1x_{10}=0.5$.
\begin{equation*}
\label{eq:108planes}
\begin{aligned}
\begin{array}{rrrrrrrrrrr}
-2 & -2 & -2 & -2 & -2 & -2 &  1 &  3 & -8 & -1 & 0.5 \\
-2 & -2 & -2 & -2 & -2 & -2 & -1 & -3 &  8 &  1 & 0.5 \\
-2 & -2 & -2 & -2 & -2 & -2 & -1 &  8 &  3 & -1 & 0.5 \\
-2 & -2 & -2 & -2 & -2 & -2 &  1 & -8 & -3 &  1 & 0.5 \\
-2 & -2 & -2 & -2 & -2 & -2 &  4 & -1 &  1 & -7 & 0.5 \\
-2 & -2 & -2 & -2 & -2 & -2 & -4 &  1 & -1 &  7 & 0.5 \\
-2 & -2 & -2 & -2 & -2 & -2 & -7 & -1 & -1 & -4 & 0.5 \\
-2 & -2 & -2 & -2 & -2 & -2 &  7 &  1 &  1 &  4 & 0.5
\end{array}
\end{aligned}
\end{equation*}

It can be verified (\Cref{appendix:verification-algorithm}) that these planes slice all 5120 edges of $Q_{10}$, completing the proof.
\end{proof}

This construction of $8$ slicing hyperplanes is not unique, and additional examples are provided in \Cref{appendix:additional-constructions-and-observations}. 

Let $S(n,k)$ be the maximum number of edges of $Q_n$ that can be sliced using $k$ hyperplanes. We additionally improve on the best lower bounds known for various $S(n,k)$~\cite{nowack2022slicing,emamy2008coverings,newbounds} and provide new bounds for values of $n,k$ that have not been studied previously. These are summarized in \Cref{tab:main1} and \Cref{tab:main2}.

% {
% \setlength{\tabcolsep}{5pt} %slightly thinner margins so that it fits
\begin{table}[h]
\centering
\caption{Lower bounds achieved on $S(n, k)$ up to $n=11$. Bold indicates new best bounds, underline indicates a tight lower bound (full slicing).}\label{tab:main1}
\begin{tabular}{|c|ccccccc|}
\hline
$S(n,k)$ & $k=4$ & $k=5$ & $k=6$ & $k=7$ & $k=8$ & $k=9$ & $k=10$  \\
\hline
$n=5$  & \underline{78}   & \underline{80}   & --   & --   & --   & --   & --       \\
$n=6$  & 184  & \underline{192}  & \underline{192}  & --   & --   & --   & --        \\
$n=7$  & 410  & \textbf{440}  & \underline{448}  & \underline{448}  & --   & --   & --        \\
$n=8$  & \textbf{920}  & 980  & \textbf{1018} & \underline{1024} & \underline{1024} & --   & --        \\
$n=9$  & \textbf{1974} & \textbf{2184} & \textbf{2266} & \textbf{2298} & \underline{2304} & \underline{2304} & --        \\
$n=10$ & \textbf{4312} & \textbf{4704} & \textbf{4998} & \textbf{5088} & \textbf{\underline{5120}} & \underline{5120} & \underline{5120}     \\
$n=11$ & \textbf{9072} & \textbf{10248} & \textbf{10816} & \textbf{11128} & \textbf{11240} & \textbf{\underline{11264}} & \underline{11264}  \\
\hline
\end{tabular}
\end{table}
% }

\begin{table}[h]
\centering
\caption{Lower bounds achieved on $S(n, k)$ for $k\leq9\leq12$ and $n=k+3$. Parenthetical numbers indicate the total edges in $E_n$, bold indicates new best bounds.}\label{tab:main2}
\begin{tabular}{|c|cccc|}
\hline
$S(n,k)$ & $k=9$ & $k=10$ & $k=11$ & $k=12$ \\
\hline
% $n=k+3$ & \textbf{24552} & \textbf{53224}   & \textbf{114666} & \textbf{245748} \\
$n=k+3$
& \makecell{\textbf{24552} \\ (24576)}
& \makecell{\textbf{53224} \\ (53248)}
& \makecell{\textbf{114666} \\ (114688)}
& \makecell{\textbf{245748} \\ (245760)} \\
\hline
\end{tabular}
\end{table}

%TODO: do we say LLM here before introducing the abbreviation?
To discover these constructions, we make use of the LLM-driven CPro1 protocol~\cite{rosin2025using}, combining its ability to autonomously generate and evaluate diverse candidate algorithms with observations on the highly structured nature of the resulting solutions to develop an edge-weighted hill-climbing algorithm capable of finding collections of planes which slice many edges of $Q_n$.

%% file: sections-arxiv/related-work.tex
\section{Related work}
%TODO: should we have explicit subsections here?
The determination of $S(n)$ has been studied extensively in multiple fields such as combinatorics~\cite{sauermann2025improved,alon2002balancing}, geometry~\cite{grunbaum1972cut}, machine learning~\cite{o1971hyperplane}, and Boolean circuit complexity~\cite{paturi1990threshold}. More information on the connection of $S(n)$ to these fields can be found in the comprehensive survey by Saks~\cite{saks1993slicing}.

O'Neil~\cite{o1971hyperplane} proved that $ S(n)=\Omega(\sqrt{n})$. This lower bound has seen recent improvements~\cite{yehuda2021slicing,klein2023slicing,sauermann2025improved} starting with a 
$\Omega(n^{0.51})$ lower bound~\cite{yehuda2021slicing}. The best known lower bound to date~\cite{sauermann2025improved} is 
$\Omega(n^{13/19}\log^{-32/12}n)$. It is conjectured (see e.g., ~\cite{sauermann2025improved}) that $S(n)=\Omega(n)$. So far such a linear lower bound is only known under additional restrictions on the slicing hyperplanes such as having non-negative coefficients~\cite{ahlswede1990identity,gotsman1994spectral} or coefficients in $\{-1,1\}$~\cite{alon2002balancing}, where a lower bound of $n/2$ is known. Recently, Sauermann and Xu~\cite{sauermann2025nondegenerate} proved a lower bound of $n/(4C)$ on the number of hyperplanes needed to slice $Q_n$ when the coefficients of the hyperplanes are integers in $\{-C,\ldots,C\}$ for a fixed integer $C>0$.

Significant mathematical and computational efforts have been directed to improve the upper bound on $S(n)$~\cite{emamy1986cuts,emamy2021cut,ziegler2000computing,emamy2008coverings}.
Roughly speaking, a recurring idea in the computational works is to compute all sliceable subsets of edges of $Q_n$: sets of edges that can simultaneously be
cut by a single hyperplane.
%(one way to achieve this is by guessing which endpoint of each edge belongs to one of the two regions created by the hyperplane and then use linear programming to find a separating hyperplane). %is this actually how all possibe sliceable sets are found? that doesn't seem quite right to me?
Thereafter, solve the resulting Set Cover instance with parameter $k$ to decide whether $k$ sets from the set family consisting of all sliceable sets can cover all edges of $Q_n$. 
These works resulted in upper and lower bounds on $S(n,k)$, and established that $S(n)=n$ for all $n \leq 5$ and $S(6)=5$; it is currently open if $S(7)=5$. For a recent survey of these results, see~\cite{nowack2022slicing}.
Despite these efforts, the Paterson bound has not been improved since it was announced in~\cite{o1971hyperplane}.
 
A central challenge in algorithmically finding slicing hyperplanes is the exponential growth of the vertex and edge sets of $Q_n$ and the exponential complexity of solving the NP-hard Set Cover problem. %TODO: maybe should be a bit more precise that finding/enumerating the maximum sliceable sets is the real issue here (I think)
These factors combine to produce prohibitive running times even for relatively small values of $n$. The curse of dimensionality presents an additional obstacle: an exhaustive search over even a single hyperplane with integer coefficients in $[-C, C]$ quickly becomes infeasible for modest values of $C$ and $n$; similarly, determining all sliceable subsets of $E_n$ quickly becomes infeasible.
In contrast, we rely on local search algorithms, which avoid the prohibitive running time arising from previous approaches to compute $S(n)$.
%but do not provide impossibility guarantees.

The use of AI as a tool in the process of proving or disproving mathematical statements has increased in recent years~\cite{romera2024mathematical,gladkov2025bunkbed,charton2024patternboost,georgiev2025mathematical,bubeck2025early,rosin2025using,berczi2026flow}.
In particular, our use of generative AI for the discovery of algorithms that can find objects of mathematical significance is influenced by recent breakthroughs achieved using methods such as FunSearch \cite{romera2024mathematical}, its successor AlphaEvolve \cite{novikov2025alphaevolve}, and CPro1 \cite{rosin2025neurips}. 

CPro1 \cite{rosin2025neurips} and AlphaEvolve \cite{novikov2025alphaevolve} are both systems which revolve around using large language models (LLMs) to generate large numbers of programs in service of solving mathematical problems.
CPro1 has solved long-standing open instances of combinatorial design problems \cite{rosin2025a} and instances described in recent research literature \cite{rosin2025using}. AlphaEvolve has been used to generate new algorithms, optimize existing programs, and improve bounds and constructions for an impressive range of mathematical problems. %TODO: i think i will try to give AlphaEvolve a bit more credit here, feels a little bit understated
The primary difference between CPro1 and AlphaEvolve lies in how they generate candidate programs: CPro1 uses an LLM to propose diverse strategies and implement them from scratch, and provides automated hyperparameter tuning for programs that expose hyperparameters, whereas AlphaEvolve starts from an initial program and iteratively refines it using LLMs in a sophisticated evolutionary loop.
While AlphaEvolve is not available for public use, the open source OpenEvolve \cite{openevolve} and CodeEvolve \cite{assumpccao2025codeevolve} tools have replicated or improved on several of AlphaEvolve's results, and have been used to produce new results \cite{barbarians,scalinglaws}. Recently, Aletheia \cite{aletheia} was announced, which is a similarly autonomous LLM-based mathematics research system, but whose focus lies towards proof generation rather than computational search leading to verifiable mathematical constructions as we do here.  %problems which can be approached computationally.

The hypercube slicing problem differs from some of the problems that have recently attracted interest in studies of generative AI and other computational approaches for mathematics, such as circle and hexagon packing and minimizing ratios of maximum to minimum distance~\cite{novikov2025alphaevolve,georgiev2025mathematical,assumpccao2025codeevolve,parczyk2025new}. In these problems, one is interested in finding a construction minimizing or maximizing a certain real parameter (for example, placing $n$ disjoint unit hexagons in a larger hexagon whose side length is as small as possible). In contrast, slicing the hypercube involves a ``hard" constraint (either all edges are sliced by a set of hyperplanes or they are not). 
%unlike constructions involving the maximization of a continuous parameter. 
% \dun{I don't really think this whole paragraph is a meaningful distinction, since I am pretty sure plenty of the other problems AI was used in were optimizing discrete variables. (Technically, we only succeed if we find a full slicing, but we can easily just frame it as maximizing the cut number).
% }

A prominent theme when considering AI assisted mathematical proofs is the extent to which the proof depends on AI tools such as LLMs. Using the recent informal taxonomy of quantifying the degree of reliance of math proofs on AI tools~\cite{tao_mathstodon_2025}, our proof falls somewhere between ``AI-powered modifications of existing solutions (which could be either human-generated or AI-generated)" to ``Complex interactions between humans and AI tools in which the AI tools provided crucial calculations." Under the taxonomy provided in \cite{aletheia}, our proof falls under ``Level C: A substantive human-AI collaboration where both parties contribute in an essential way."

%% file: sections-arxiv/algorithm.tex
\section{Algorithms for Slicing the Hypercube}\label{Sec:Algorithms}
We found the collection of hyperplanes presented in Theorem \ref{main-theorem} by developing a local search algorithm targeted toward the hypercube slicing problem.
Initially, we manually created an algorithm (\Cref{alg:tabu_search}) based on tabu search, which has previously proven useful in constructing mathematical objects~\cite{parczyk2025new} and optimization methods geared towards combinatoric problems~\cite{mehrabian2023finding}.
After many rounds of alteration and optimization, this algorithm found constructions slicing all but $6$ edges of $Q_{10}$ with $8$ planes, but fell short of improving the upper bound $S(n) \leq \lceil \frac{5n}{6} \rceil$ established by Paterson. 
After a prolonged period without improved results, we opted to explore algorithmic approaches to the hypercube slicing problem using CPro1 \cite{rosin2025neurips}, with the results from our tabu algorithm serving as high-quality test cases.

\subsection{Algorithm Search} \label{sec:algorithm-discovery}

CPro1 prompts a large language model with a formal problem definition and asks it to propose a diverse set of solution strategies, which it then implements as efficient C programs that are autonomously evaluated in a sandboxed environment.
To use CPro1, problems are defined by a set of constraints, and a specific \textit{instance} of that problem assigns concrete values to the parameters of those constraints; the task is to determine whether the constraints are satisfiable for that instance. In hypercube slicing, the constraint is to give a collection of $k$ hyperplanes which slices $s$ edges of $Q_n$. An instance specifies specific values for the tuple $(n, k, s)$, with $s=|E_n|$ to require a full solution. %as well as additional hyperparameters the LLM may create for its algorithm and choose to expose for automated tuning. 
%In hypercube slicing, an instance specifies the hypercube dimension $n$, the number of hyperplanes $k$, and the minimum number of edges $s$ that must be sliced (with $s=|E_n|$ to require a full solution).

CPro1 evaluates candidate programs on a collection of ``dev" instances known to be solvable---here, instances $(n,k,s)$ for which our tabu search had already discovered solutions. The candidates which slice the most edges on these instances most quickly are then evaluated on instances for which no solution is currently known.

We configured CPro1 with a definition of the hypercube slicing problem with integer coefficients restricted to $[-40,40]$ and no other constraints. We performed five runs of CPro1, with 200 candidate programs generated per run. Two runs used small dev instances with $n<8$, while the remaining three used larger dev instances with $n$ up to $17$. CPro1 generated and tested diverse strategies, such as simulated annealing and tabu search, and variations within these strategies, such as alternative cost functions.

From the runs that used small dev instances, the best program used an adaptively edge-weighted hillclimbing search (\Cref{sec:algorithms-local-search}). The method was effective for small instances: it solved all our dev instances for $n<8$, matching the results of our tabu search. However, this technique by itself could not find a set of $8$ hyperplanes slicing $Q_{10}$.
From the runs that used larger dev instances, several programs were able to improve upon our previous best results for $n>12$ with $k=n-3$, though still fell short of slicing all edges. One such run produced an improvement for $n=15,$ $k=12$ where all coefficients were equal to the same value in the first 13 dimensions across all planes, and only the last 2 columns varied. 
% \dr{An example we want to add to the Supplementary information (SI)}.\cnote{done}
While this program was not more generally successful, and could not find a full solution for $n=10,$ $k=8$, we decided to pursue this intriguing constraint ourselves.

We manually modified the adaptively edge-weighted hill-climbing search to constrain the initial columns to all have the same constant value. Observing bias terms with low magnitudes across many high-performing partial solutions, and informed by observations from manually constructing the tabu search algorithm, we also fixed all bias terms to $0.5$, an arbitrary offset to avoid passing through vertices. Following this, we manually explored varying values for the constrained columns and for the number of constrained columns, improving upon our best partial results for instances including $n=11$, $k=8$, with further runs obtaining the full solution to $n=10,$ $k=8$.

Following this, we explored other approaches to improving the algorithm. By hill-climbing in smaller local neighborhoods, we reduced the number of times computation is spent checking proposed moves which are later rejected. Further, observing that the planes in solutions for $n=10$, $k=8$, and $n=6$, $k=5$ slice very similar numbers of edges in the reduced hypercube induced by their fixed columns (\Cref{sec:constrained}), we added a variance penalty which encourages planes to slice similar numbers of edges in the reduced hypercube. While these approaches yielded improved partial results and significant performance gains---decreasing the time to find solutions for $n=10$, $k=8$ by more than two orders of magnitude when combined---they proved insufficient for finding full solutions in dimensions $n>10$ for $k=n-3$. 

The AI method CPro1 provided the ingredients here in separate solutions: the efficient edge-weighted hillclimbing search, and the constraint on initial columns, but did not by itself produce a combined program that could solve $n=10$ $k=8$. In hindsight, it seems the insight on constraining the initial columns was close at hand: our previous experiments with tabu search had already constrained several values in each row to be the same value, just not the same columns or the same value across all rows, and we also observe now that Paterson's original solution to $n=6$ $k=5$ can be put in a form in which the first 3 columns are all equal to the same identical constant while only the last three columns vary. This highlights how AI-based research and discovery tools can produce large volumes of potential insights in the process of exploration, but also highlights the importance of surfacing these results for human review and synthesis.

%TODO: For PNAS, adapt paragraph above to be on the interplay between human and AI, human in the loop, 

\subsection{Local search}\label{sec:algorithms-local-search}

Across all proposed programs, we found that local hill-climbing search algorithms were consistently most effective. Such algorithms begin with an initial candidate solution (i.e., $k$ hyperplanes), then:

\begin{enumerate}
    \item Given the current candidate, generate similar candidates---say, by changing a single coefficient of a hyperplane---replacing the current if the ``fitness'' improves (in our case, a count of the edges sliced).
    \item Repeat until a fitness threshold is reached.
\end{enumerate}

Out of many such local search strategies, we found that approaches with adaptive weighting yielded dramatically improved solutions. This involves incrementing the ``weight'' of each edge---the value contributed to the fitness function when sliced---if it remains unsliced for many iterations. Such dynamic weighting is a technique commonly used in other combinatorial optimization problems (e.g. \cite{numvc}) to help algorithms escape local optima.

We define a set of allowed neighbors to search over in step (1). We allow neighbors $\mathcal H'$ with Hamming distance $\leq2$ within a single plane, that is, we search over collections of hyperplanes where exactly one hyperplane differs in $\leq2$ coordinates. We further restrict this by requiring that each coordinate differs by no more than $d$, a hyperparameter, which reduces computation spent evaluating moves which are unlikely to be productive. Searches are always performed over integer coefficients; if a full solution $\mathcal{H}^*$ with $k$ planes exists, then there exists a full solution with $k$ planes and integral coefficients \citep{sauermann2025nondegenerate}.

\subsection{Reduced Hypercube}
\label{sec:constrained}
Following our results from tabu search and initial testing with CPro1, we observed that across several runs with strong slicing results, every hyperplane obeyed a composition (a representation of $n$ as $\ell$ positive integers, $n=\sum_{i=1}^{\ell}b_i$) %which essentially prescribes groups of identical coefficients. 
%Formally, given a hypercube dimension $n$, we define a pattern by an ordered list of positive integers $[b_1,\ldots,b_\ell]$ with sum $n$. 
where each $b_i$ denotes the size of a group of equal coefficients. For example, the hyperplanes in the solution described in \Cref{main-theorem} %\eqref{eq:108planes}
satisfy a $[6,1,1,1,1]$ composition.
Enforcing a particular composition not only drastically reduces the dimensionality of the search space, but also enables a natural optimization to the fitness computation: rather than operating on the $n \times 2^{n-1}$ edges of the $n$-cube, we map the cube to a grid in $\ell$ dimensions with extent $[b_1,\ldots,b_\ell]$.
%we project the cube to a cell complex in $\ell$ dimensions with extent $[b_1,\ldots,b_\ell]$. 

We formalize this reduction with the concept of a \emph{reduced hypercube} (proofs and further details in \Cref{appendix:reduced-hypercube}). Given a composition $[b_1,\ldots,b_\ell]$ of $n$, any vertex $(v_1, v_2,\cdots, v_n)$ has a reduced representation with $\ell$ coordinates, where each new coordinate is the sum of the coordinates in the composition. 
This grid has only
\[
|V|=\prod_{i=1}^\ell(b_i+1)\] vertices, where two reduced vertices are connected by an edge if and only if they differ by exactly 2 in one coordinate, corresponding to changing the sign of a coordinate in the original hypercube. Furthermore, this grid has
\[|V|\left(\ell-\sum_{i=1}^\ell\frac{1}{{b}_i +1}\right)\]
edges, 
%This grid only has \sum_{i=1}^\ell b_i \prod_{j\neq i} (b_j + 1)$ edges\dr{Are we referring to the number of vertices? If yes, the first sum is not needed and the product should be from $i$ to $\ell$. If no, an explanation should be added for this formula},
and computing the fitness of a candidate hyperplane reduces to evaluating slicing of these (multiplicity weighted) %\dr{What are the weights}
grid edges. %Note that we evaluate the slicing of the ``reduced'' version of the hyperplane in $\ell$ dimensions where each coefficient is drawn from its corresponding block.
Given a collection of planes $\mathcal H$ satisfying such a composition, we let $\phi(\mathcal H)$ denote the function which yields the set of reduced edges.

As an example, enforcing $[6,1,1,1,1]$ maps $Q_{10}$ to a $5$-dimensional, $6 \times 1 \times 1 \times 1 \times 1$ grid via the linear projection $(x_1,\ldots,x_{10}) \to (x_1 + \cdots + x_6,x_7,\ldots,x_{10})$. This grid has $320$ edges -- a $16 \times$ reduction from the $5120$ edges of $Q_{10}$.
In addition, the slicing computations only involve $5$ multiplications per hyperplane instead of $10$.

When searching over hyperplanes under these constraints, we find that using $|\phi(\mathcal H)|$ as a fitness function is more effective for finding full solutions, while a version which retains edge multiplicities corresponding to the unreduced hypercube is more effective at finding high-slicing partial solutions.
Further, we observe that search algorithms typically perform best under highly constrained patterns with only a single $b_i>1$ and all other $b_j=1$, so long as $b_i$ is not too large. For example, $b=[6,1,1,1,1]$ proves to be an effective composition for $n=10$.
%It is curious that solutions are possible in such a constrained space.

We also observe that while the number of edges sliced by each plane in a full solution $\mathcal H^*$ can vary significantly, the number of \emph{reduced} edges sliced is nearly constant across each plane. %That is, $|\phi(H_i)| \approx |\phi(H_j)| ~~ \forall H_i, H_j \in \mathcal H^*$.
For instance, for every full solution we found for $Q_{10}$, $\left| |\phi(H_i)| - |\phi(H_j)| \right| \leq 4 ~~ \forall H_i, H_j \in \mathcal H^*$.
Inspired by this observation, we subtract a variance penalty that encourages planes to slice similar numbers of edges in the reduced hypercube.
Formally, given a weighting $w$ on each edge, a set of $k$ hyperplanes $\mathcal{H}$, and letting $\mu=\sum_{H \in \mathcal H} \frac{|\phi(H)|}{k}$, we define fitness function $\psi$ as
\begin{equation*}
    % \begin{split}
    \psi(\mathcal H, w) \coloneq 
    \sum_{(v_1,v_2) \in \phi(\mathcal H)} \big[ w(v_1,v_2) \big] - %\operatorname{Var}\bigl(\{\,|\phi(H)| : H\in\mathcal H\,\}\bigr)
    \sum_{H \in \mathcal H} \frac{(|\phi(H)|-\mu)^2}{k}
    % \end{split}
\end{equation*}

Together, these alterations to the basic adaptive edge-weighted search framework yield \Cref{alg:full_algorithm}.

%TODO: probably swap psi and phi, linguistically phi makes more sense for fitness
% Also: biases are set to 0 (I think this should also be mentioned as an insight we got from Tabu search)

\begin{algorithm}[t]
\caption{Our adaptive edge-weighted search}\label{alg:full_algorithm}
\begin{algorithmic}[1]
\Require composition $b$; the edges of the corresponding reduced hypercube for $n$ dimensions $E_n$; number of planes $k$; hyperparameters max\_iterations, weight\_period, and weight\_limit %and weight limit $L$
%TODO: does it make sense to have an \Ensure here for H*?
\State $\mathcal H^* \gets \emptyset$
\While{time limit not exceeded}
\State $\mathcal H \gets \text{random collection of $k$ planes satisfying $b$}$ 
% \State Initialize weights $w:E_n\to \mathbb{R}$ to $ w(v_1,v_2) := 1$ for all $(v_1,v_2) \in E_n$
\State Initialize weights $w:E_n\to \mathbb{R}$ to $1 ~\forall(v_1,v_2) \in E_n$

\While{$t < \text{max\_iterations}$}
    \State $\mathcal H' \gets \operatorname{RandomNeighbor}(\mathcal{H}, b)$ 

    \If{$\psi(\mathcal H', w) \ge \psi(\mathcal H, w)$} \Comment{Accept if $\geq$ weighted fitness} 
        \If{$\psi(\mathcal H', w) > \psi(\mathcal H, w)$} \Comment{Reset timer if $>$ weighted fitness}
            \State $t_w \gets 0$ %\Comment{Reset timer if $>$ fitness}
        \EndIf
        \If {$|\phi(\mathcal H')| > |\phi(\mathcal H)|$ } \Comment{Reset timer if higher slicing number} 
            \State $t \gets 0$ %\Comment{Reset timer if slices more} 
        \EndIf
        \State $\mathcal{H} \gets \mathcal H'$ \Comment{Accept new solution} 
    \EndIf

    \If{$t_w > \text{weight\_period}$} \Comment{Increase weights for unsliced edges}
        % \For{$(v_1, v_2)\notin \phi(\mathcal H)$} 
        %     \State $w(v_1,v_2) \gets \min\left(w(v_1,v_2)+1, L\right)$
        % \EndFor
        \State $w(v_1,v_2) \gets \min\left(w(v_1,v_2)+1, \text{weight\_limit}\right) ~~\forall (v_1, v_2) \notin \phi(\mathcal H)$
        \State $t_w \gets 0$
    \EndIf

    \State $t_w \gets t_w+1$
    \State $t \gets t+1$

\EndWhile
\State $\mathcal H^* \gets \argmax_{h \in \{\mathcal{H}^*,\mathcal H\}} |\phi(h)|$ \Comment{Track overall best solution}
% \If{$|\phi(\mathcal H)| > |\phi(\mathcal H^*)|$} \Comment{Track overall best solution}
%     \State $\mathcal H^* \gets \mathcal H$ 
% \EndIf
\EndWhile

\end{algorithmic}
\end{algorithm}

\subsection{AI-Driven Discovery and Researcher Intervention}
Is the use of generative AI for our improved constructions necessary? Prior to involving AI, we experimented extensively for over a year with tabu search without improving the upper bound on $S(n)$. Certainly, it is possible, and perhaps probable, that with further time and computational investment we would have discovered hyperplane arrangements which slice $Q_{10}$. %(and tabu search has been used before for finding combinatorial constructions~\cite{parczyk2025new}).
However, utilizing an AI-driven automated discovery system rapidly accelerated this progress by dramatically increasing the volume of ideas that could be attempted and from which insights could be gleaned.
 %TODO: need to phrase this better, add more

It is also natural to ask: Is human involvement necessary to arrive at our solutions? %We demonstrate that only by combining human insight with the high-volume discovery potential of automated research 
While LLM-driven automated systems are powerful enough to discover solutions that slice $Q_6$ with $5$ planes and attain strong partial slicing results, they were unable to obtain improvements on $S(n)$ without further human intervention. Only by combining researcher oversight with the insights surfaced by automated search and discovery were we able to achieve the results in Tables \ref{tab:main1} and \ref{tab:main2}.

%To examine these questions,
To support our assessment, we present results from OpenEvolve, CPro1, and our initial tabu search algorithm when ran on the hypercube slicing problem.
For OpenEvolve, we evaluate using the LLMs OpenAI o4-mini, OpenAI GPT-5-mini, Google Gemini 3 Pro and Flash, and use the provided templates for generating Rust code; for CPro1 we evaluate with OpenAI o4-mini and generate C code (\Cref{sec:algorithm-discovery}). Both systems are given a similar number of attempts (programs). We observe the different methods of generating programs between OpenEvolve and CPro1 reflected in their LLM usage patterns: over 80\% of CPro1's LLM tokens are \textit{output} tokens from generating diverse candidate programs starting from the problem definition, whereas over 80\% of OpenEvolve's LLM tokens are \textit{input} tokens for analysis of prior programs and results. 
These systems are still new, and it is not yet clear what the right patterns are for using LLMs to generate programs for solving mathematical problems. Some studies have found AlphaEvolve's evolutionary loop unnecessary for some of the results it has achieved \cite{randomsampling}, but its success across a wide variety of mathematical and non-mathematical domains demonstrates the framework's versatility.

%TODO: write this section better (not just a list)
We compare results obtained from our manually created tabu search, OpenEvolve without further instruction beyond the problem description, OpenEvolve with the additional constraint that hyperplanes should contain groups of identical coefficients, CPro1 before human intervention, and our final results with CPro1 after introducing our manual changes. \Cref{tab:alg-comparison} demonstrates that OpenEvolve and CPro1 without further oversight could achieve strong results, reproducing the bound on $S(n)$ due to Paterson and in some cases improving on our previous partial slicing constructions achieved by tabu search, but failed to produce a new upper bound for $S(n)$.
Meanwhile, CPro1 code with our manually introduced changes consistently achieves state of the art results, highlighting the importance of introducing human oversight at points within the discovery process beyond initial problem formulation.

\begin{table}
\centering
\caption{Comparison of the number of edges sliced by our manually created tabu search algorithm, AI-only systems, and AI systems with human intervention. Underlines indicate full hypercube slicing.} 
\begin{tabular}{|l|c|c|c|c|c|}
\hline
 & \makecell{Tabu \\search} & \makecell{OpenEvolve} & \makecell{Composition-\\constrained \\ OpenEvolve} & \makecell{CPro1} & \makecell{CPro1 +\\ Human} \\ \hline
\makecell{$n=6$ \\ $k=5$} & \underline{192} & \underline{192} & \underline{192} & \underline{192} & \underline{192} \\ \hline
\makecell{$n=10$ \\ $k=8$} & 5114 & 5100 & 5112 & 5114 & \underline{5120} \\ \hline
\makecell{$n=15$ \\ $k=12$} & 245252 & 240509 & 238215 & 245628 & 245748 \\ \hline

\end{tabular}

% \begin{table}
% \centering
% \begin{tabular}{|l|c|c|c|c|c|}
% \hline
%  & \makecell{Tabu \\search} & \makecell{OpenEvolve} & \makecell{Composition-\\constrained \\ OpenEvolve} & \makecell{CPro1} & \makecell{CPro1 +\\ Human} \\ \hline
% $n=6$ $k=5$ & \underline{192} & \underline{192} & \underline{192} & \underline{192} & \underline{192} \\ \hline
% $n=10$ $k=8$ & 5114 & 5100 & 5112 & 5114 & \underline{5120} \\ \hline
% $n=15$ $k=12$ & 245252 & 240509 & 238215 & 245628 & 245748 \\ \hline

% \end{tabular}

\label{tab:alg-comparison}
\end{table}

% Alternate version where n and k are on the same line
% \newcommand{\nk}[2]{\makebox[3em][l]{$n=#1$}\,\,$k=#2$}
% \begin{table}
% \centering
% \begin{tabular}{|l|c|c|c|c|c|}
% \hline
%  & \makecell{Tabu \\ search}
%  & \makecell{OpenEvolve}
%  & \makecell{Composition-\\constrained \\ OpenEvolve}
%  & \makecell{CPro1}
%  & \makecell{CPro1 +\\ Human} \\ \hline

% \nk{6}{5}   & \underline{192} & \underline{192} & \underline{192} & \underline{192} & \underline{192} \\ \hline
% \nk{10}{8}  & 5114 & 5100 & 5112 & 5114 & \underline{5120} \\ \hline
% \nk{15}{12} & 245252 & 240509 & 238215 & 245628 & 245748 \\ \hline

% \end{tabular}
% \caption{Comparison of the number of edges sliced by our manually created tabu search algorithm, AI-only systems, and AI systems with human intervention. Underlines indicate full hypercube slicing.} 
% \label{tab:alg-comparison}
% \end{table}
% \end{table}

%% file: sections-arxiv/conclusion.tex
\section{Discussion}

This work illustrates the effectiveness of AI in finding constructions in large high-dimensional spaces which can prove challenging for effective search by human researchers~\cite{swirszcz2025advancing,peterson2021using}. Our new constructions were obtained with open source tools on a personal machine without requiring excessive computing power, pointing to the applicability of AI in mathematics (and other disciplines) for researchers that may not have access to computational resources of industrial scale, so long as LLM queries remain within budget. In particular, we believe that CPro1 could be useful in finding constructions in combinatorics, geometry and theoretical computer science. 
%The geometric nature of our problem adds to the evidence of the utility of AI in finding geometric constructions~\cite{berthold2026global,berczi2026flow,swirszcz2025advancing} and 
%We believe that CPro1 could be useful in finding constructions in combinatorics, geometry and theoretical computer science. 

Importantly, while AI-driven search and discovery can explore vast spaces that are otherwise intractable to attempt through human effort alone, we also advocate that large-scale AI-driven search methods be explicitly designed to support human-in-the-loop workflows. In our experience, the task was not accomplished by an AI system in isolation; rather, progress emerged when the automated, high-throughput search surfaced human-interpretable patterns and partial insights, which were then used to manually guide further refinement. We therefore argue that AI-driven automated discovery systems should actively organize and present observations and recurring patterns in ways that are accessible to human researchers, enabling targeted interventions and closer integration with the search and discovery system.

% Importantly, while AI-driven search and discovery [X], we advocate for ...
% [Something about: we advocate for AI-driven search methods which allow for human-in-the-loop workflows... sort of emphasizing that AI systems weren't able to accomplish this task alone, but if a workflow was built that by-default surfaced human-interpretable insights gathered by an automated AI search, which could then be used for further refinement, that would be ideal...]
% %^ above should probably be specifically about large-scale search
%TODO: https://www.tandfonline.com/doi/full/10.1080/0960085X.2025.2475962#d1e191 seems very relevant to cite for the story, though that paper is more about decision-making rather than mathematical discovery
%https://arxiv.org/abs/2306.01694

We expect that further improvements on the upper bound for $S(n)$ are possible. For instance, when searching for lower bounds for $S(15, 12)$ we find constructions of $12$ hyperplanes only $12$ edges from fully slicing $Q_{15}$, constructions for $S(14, 11)$ only $22$ edges from slicing $Q_{14}$, and constructions for $S(13,10)$, $S(12,9)$, and $S(11, 8)$ only $24$ edges from slicing their respective hypercubes.
Considering the belief that $S(n)=\Omega(n)$, the progress recorded here could bring us closer to finding the precise value of $S(n)$.
Beyond mathematical interest in slicing the hypercube, the problem of lower bounding $S(n,k)$ is attractive for evaluating AI for math systems considering the ease of verifying that a given solution slices a given set of edges coupled with the challenges in constructing such solutions. 

A recurring feature in our constructions is that the collections of slicing hyperplanes appear to be low-dimensional in nature: the first several coefficients of each hyperplane are identical, and we observe additional forms of repetitive structure in many solutions (\Cref{appendix:additional-constructions-and-observations}). %(\Cref{SI-appendix:constructions_n10k8}).
These patterns may point to underlying structural properties of optimal or near-optimal slicing hyperplanes. An interesting direction of further study is to formalize the existence of ``good" solutions to the slicing problem carrying low-dimensional structure, and to what extent full solutions admit such properties. 

Our work has several limitations. Although finite constructions of slicing hyperplanes translate to an upper bound on $S(n)$, they fall short of proving upper bounds on $S(n)$ of order $o(n)$ (if such upper bounds exist).
Furthermore, our methodology is not applicable to proving lower bounds on $S(n)$ or upper bounds on $S(n,k)$. Methods from the field of formal verification and SAT solvers could be potentially useful for finding better upper bounds for $S(n,k)$ as well as better lower bounds for $S(n)$ in the regime of $n,k$ studied here (e.g., $n,k \leq 15$). Discovering tighter lower and upper bounds on $S(n,k)$ using a combination of automated tools such as CPro1 and formal methods is an exciting future research direction. 

%% file: sections-arxiv/acknowledgements.tex
\section*{Acknowledgements}
The authors are grateful to Adam Zsolt Wagner for his collaboration in the early phase of the project (August-November 2024) and creating the initial implementation of the tabu search algorithm which was later developed into the version in \Cref{alg:tabu_search}. He declined our offer to be a co-author, as he did not feel his contribution granted being a co-author of this paper.
This research was performed using computational resources supported by the Academic \& Research Computing group at Worcester Polytechnic Institute.
The authors thank Noga Alon and Swarat Chaudhuri for their useful feedback.

%% file: sections-arxiv/appendix.tex
\crefalias{section}{appendix}
\crefalias{subsection}{appendix}

\section{Additional Constructions and Observations on Structure}\label{appendix:additional-constructions-and-observations}

\subsection{Paterson construction for slicing \texorpdfstring{$Q_{6}$}{Q6}}
Solutions are listed such that each row corresponds to the vector of coefficients of a hyperplane, with the final entry representing the bias term.  For example, the first hyperplane is $1x_1+1x_2+1x_3+3x_4+3x_5-4x_6=0$.

\begin{equation}
\label{eq:56planes}
\begin{array}{rrrrrrr}
1 & 1 & 1 & 3 & 3 & -4 & 0 \\
-2 & -2 & -2 & 3 & 3 & -1  & 0 \\
3 & 3 & 3 & 1 & 1 & -4 & 0 \\
-1 & -1 & -1 & 3 & 3 & 6 & 0 \\
3 & 3 & 3 & 1 & 1 & 8 & 0 \\

\end{array}
\end{equation}

% TODO: Remove section numbers (but need to keep name ref?)
\subsection{Additional constructions for slicing \texorpdfstring{$Q_{10}$}{Q10}}\label{appendix:constructions_n10k8}
Additional constructions which slice $Q_{10}$ are listed below. Note that there exist many solutions without $-9$ as the repeated coefficient, we simply find that fixing this value leads to finding solutions quickly. We refer to the values in the columns that are not fixed to this coefficient as the \emph{free values}.

Below is the first full solution to $Q_{10}$ that we found.

\begin{equation*}
\label{eq:108planes_orig}
\begin{array}{rrrrrrrrrrr}
-9 & -9 & -9 & -9 & -9 & -9 &   7 & -16 &   5 &  35 & 0.5 \\
-9 & -9 & -9 & -9 & -9 & -9 & -32 &  -4 & -17 &   8 & 0.5 \\
-9 & -9 & -9 & -9 & -9 & -9 &  32 &   5 &  19 &  -4 & 0.5 \\
-9 & -9 & -9 & -9 & -9 & -9 &  -3 &  15 &  -3 & -38 & 0.5 \\
-9 & -9 & -9 & -9 & -9 & -9 &  15 &   3 & -36 &   4 & 0.5 \\
-9 & -9 & -9 & -9 & -9 & -9 &   8 & -35 &  -2 & -12 & 0.5 \\
-9 & -9 & -9 & -9 & -9 & -9 &  -4 &  33 &   7 &  16 & 0.5 \\
-9 & -9 & -9 & -9 & -9 & -9 & -18 &  -4 &  34 &  -5 & 0.5 \\
\end{array}
\end{equation*}

Upon closer observation of this solution, it appears that the planes can be paired such that the free values in one member of the pair are roughly equal to the negation of the free values in the other member of the pair, after permutation. Indeed, enforcing an equality constraint of this form during the search enables it to find solutions for $k=8$ $n=10$ and $k=10$ $n=12$, and the search is more efficient due to the greatly reduced size of the search space. However, we fail to find solutions of this form even for $k=6$ $n=7$ despite the fact it is fast to find solutions without these constraints, and most partial solutions obtained in higher dimensions under these constraints are also worse. This suggests this structure is not a global phenomenon across all $n$ and $k$, it is interesting to ask under what conditions such solutions are possible.

We also observed many solutions where the free values in several planes were (approximately) a permutation of the free values in another plane, with some values negated, thus forming groups of multiple `similar' planes. Imposing these constraints leads to highly structured solutions for $n=10$ $k=8$ but does not appear to work (that is, our algorithm could not find full slicing solutions) for most other values of $k$ and $n$. The solution presented in \Cref{main-theorem} of the primary text was obtained using this method, it can be observed that the first 4 rows each use the same free values but permuted or negated. The same is true for the final 4 rows, using the same pattern.

Additionally, if we restrict the algorithm to search over only solutions in which, across all planes, the number of distinct coefficient magnitudes is at most a given threshold, we also arrive at full solutions with high degrees of structure, with one such construction presented below.

\begin{equation*}
\label{eq:108alt1}
\begin{array}{rrrrrrrrrrr}
-9 & -9 & -9 & -9 & -9 & -9 & 30 & 4 & 3 & -20 & 0.5 \\
-9 & -9 & -9 & -9 & -9 & -9 & 20 & -3 & 4 & 30 & 0.5 \\
-9 & -9 & -9 & -9 & -9 & -9 & -30 & -3 & -4 & 20 & 0.5 \\
-9 & -9 & -9 & -9 & -9 & -9 & -20 & 3 & -3 & -30 & 0.5 \\
-9 & -9 & -9 & -9 & -9 & -9 & -3 & -38 & 11 & -3 & 0.5 \\
-9 & -9 & -9 & -9 & -9 & -9 & 4 & -11 & -38 & -4 & 0.5 \\
-9 & -9 & -9 & -9 & -9 & -9 & 3 & 38 & -11 & 3 & 0.5 \\
-9 & -9 & -9 & -9 & -9 & -9 & -4 & 11 & 38 & 3 & 0.5 \\
\end{array}
\end{equation*}
\medskip

\subsection{CPro1's initial improvement to S(15,12)}
\phantomsection\label{appendix:constructions}
This CPro1-produced set of 12 hyperplanes slices 245628 out of the 245760 edges of $Q_{15}$, an improvement over our prior tabu search result of 245252.  CPro1 produced this solution which use the same value in the initial columns, without being prompted or constrained to do so.

\begin{equation*}
\label{eq:152}
\begin{array}{rrrrrrrrrrrrrrrr}
-1  &   -1  &   -1  &   -1  &   -1  &   -1  &   -1  &   -1  &   -1  &   -1  &   -1  &   -1  &   -1  &   -2  &   -2  &   0.5 \\
-1  &   -1  &   -1  &   -1  &   -1  &   -1  &   -1  &   -1  &   -1  &   -1  &   -1  &   -1  &   -1  &   -2  &   0  &    0.5 \\
-1  &   -1  &   -1  &   -1  &   -1  &   -1  &   -1  &   -1  &   -1  &   -1  &   -1  &   -1  &   -1  &   2  &    -2  &   0.5 \\
-1  &   -1  &   -1  &   -1  &   -1  &   -1  &   -1  &   -1  &   -1  &   -1  &   -1  &   -1  &   -1  &   2  &    0  &    0.5 \\
-1  &   -1  &   -1  &   -1  &   -1  &   -1  &   -1  &   -1  &   -1  &   -1  &   -1  &   -1  &   -1  &   -6  &   -2  &   0.5 \\
-1  &   -1  &   -1  &   -1  &   -1  &   -1  &   -1  &   -1  &   -1  &   -1  &   -1  &   -1  &   -1  &   -6  &   10  &   0.5 \\
-1  &   -1  &   -1  &   -1  &   -1  &   -1  &   -1  &   -1  &   -1  &   -1  &   -1  &   -1  &   -1  &   -6  &   0  &    0.5 \\
-1  &   -1  &   -1  &   -1  &   -1  &   -1  &   -1  &   -1  &   -1  &   -1  &   -1  &   -1  &   -1  &   6  &    0  &    0.5 \\
-1  &   -1  &   -1  &   -1  &   -1  &   -1  &   -1  &   -1  &   -1  &   -1  &   -1  &   -1  &   -1  &   8  &    16  &   0.5 \\
-1  &   -1  &   -1  &   -1  &   -1  &   -1  &   -1  &   -1  &   -1  &   -1  &   -1  &   -1  &   -1  &   8  &    0  &    0.5 \\
-1  &   -1  &   -1  &   -1  &   -1  &   -1  &   -1  &   -1  &   -1  &   -1  &   -1  &   -1  &   -1  &   -10  &  0  &    0.5 \\
-1  &   -1  &   -1  &   -1  &   -1  &   -1  &   -1  &   -1  &   -1  &   -1  &   -1  &   -1  &   -1  &   10  &   0  &    0.5 \\
\end{array}
\end{equation*}
\medskip

\FloatBarrier
\section{The Reduced Hypercube}\label{appendix:reduced-hypercube}
The sets of hyperplanes slicing the highest number of edges exhibited a common property: each hyperplane had groups of identical coefficients, obeying a composition.
\begin{definition}
A composition $\beta$ of $Q_n$ is an ordered list of positive integers $[b_1, b_2, \ldots, b_\ell]$ such that $\sum_{i=1}^{\ell} b_i = n$, where each $b_i$ represents the size of the $i$-th group. Let $B_j$ denote the the $j$'th group of $\beta$.
\end{definition}
For example, the solution from \Cref{main-theorem} %Equation \ref{main-eq:108planes} 
of the main text satisfies a $[6,1, 1, 1, 1]$ composition, while Paterson's solution in Equation \ref{eq:56planes} satisfies $[3, 2, 1]$.

\begin{figure}[th]
    \centering
    \includegraphics[width=0.48\textwidth]{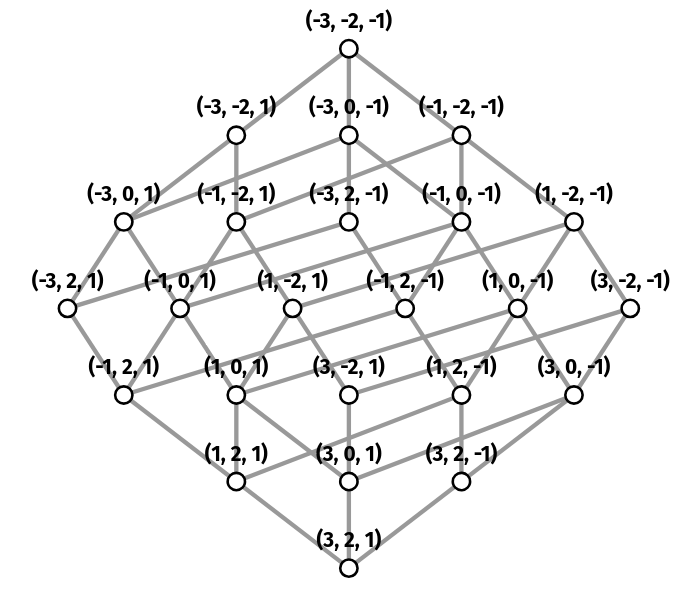}
    \hfill
    \includegraphics[width=0.48\textwidth]{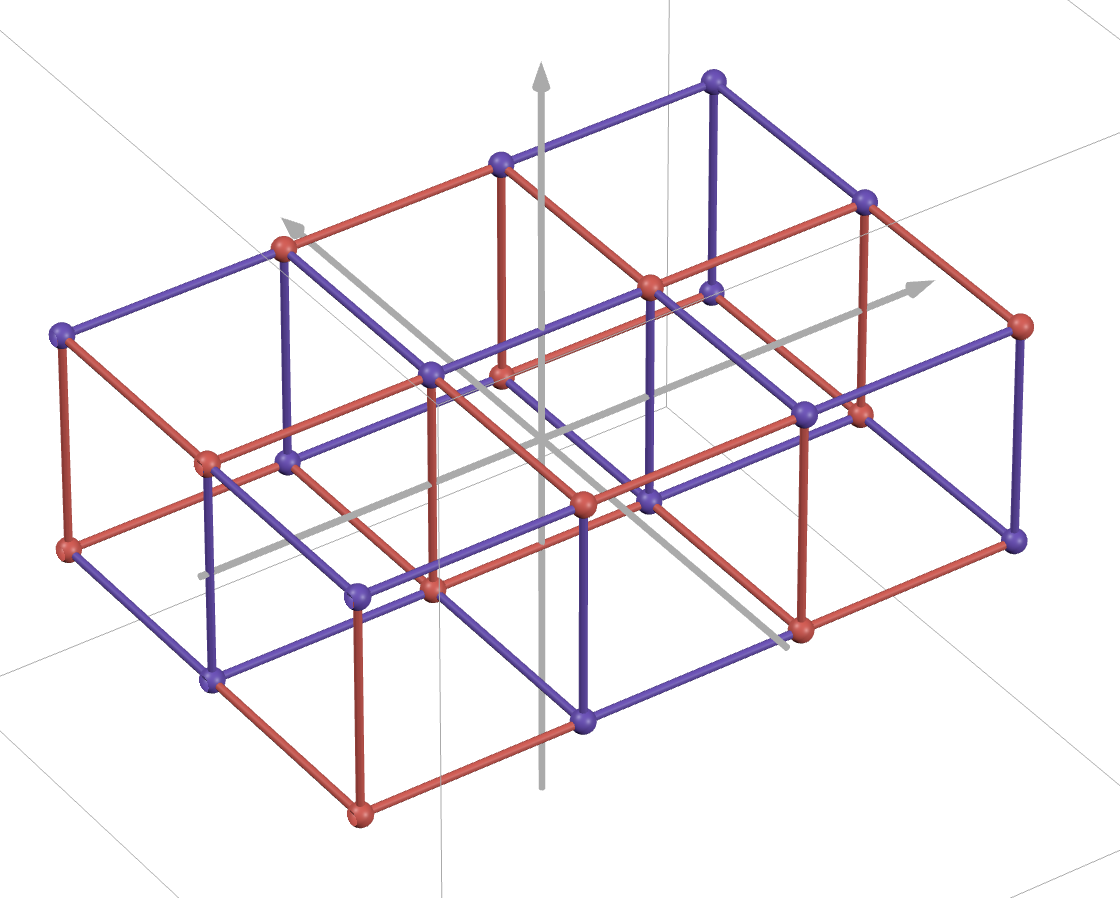}

    \caption{The $6$-dimensional hypercube $Q_6$ under the composition $\beta = [3,2,1]$ is transformed into a $3\times2\times1$ lattice in three dimensions with $46$ edges and $24$ vertices.}
    \label{fig:6d-hypercube-combined}
\end{figure}

\begin{definition}
\label{def:reduced-hyperplane}
A hyperplane $\mathbf{a}$ that satisfies a composition $\beta$ with $\ell$ groups has a reduced hyperplane representation, $\mathbf{a}^\beta = (a^\beta_1,\dots,a^\beta_\ell)\in\mathbb{R}^\ell$ where $a_i = a^\beta_j$ for all $i\in B_j$.
\end{definition}
For example, if $\mathbf{a}$ has coefficients $(4, 4, 5, 5, 1)$, it satisfies composition $\beta=[2,2,1]$ and has reduced representation $\mathbf{a}^\beta = (4,5,1)$.

\begin{definition}
\label{def:reduced-vertex}
Any vertex $\mathbf{v} = (v_1, v_2,\cdots, v_n) \in Q_n$ has a \textit{reduced vertex} representation $\mathbf{v}^\beta = (v^\beta_1, v^\beta_2,\cdots, v^\beta_\ell)$ under a composition $\beta$ in $\ell$ dimensions where $v^\beta_j = \sum_{i \in \beta_j} v_i$.
\end{definition}

Observe that the possible values for $v^\beta_j$ are $\{-{b}_j, -{b}_j + 2, \cdots,  {b}_j - 2 ,{b}_j\}$: The sum of $b_j$ vertex coordinates (with value $\pm 1$) has minimum $-{b}_j$ and maximum ${b}_j$. Intermediate values are obtained by changing a single coordinate from $-1$ to $1$, which is a difference of $2$.

\begin{definition}\label{def:reduced-edge}
    Two reduced vertices, $\mathbf{v}^\beta$ and $\mathbf{w}^\beta$, are connected by a \textit{reduced edge}, $e^\beta = (\mathbf{v}^\beta, \mathbf{w}^\beta)$, if the edge $e = (\mathbf{v}, \mathbf{w})$ exists on the original hypercube.
\end{definition}

\begin{theorem}
    Two reduced vertices, $\mathbf{v}^\beta$ and $\mathbf{w}^\beta$, are connected by an edge if and only if they differ by exactly 2 in one coordinate. 
\end{theorem}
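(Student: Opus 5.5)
We prove both directions straight from \Cref{def:reduced-vertex} and \Cref{def:reduced-edge}. The statement should be read with both reduced vertices lying in the reduced lattice, i.e.\ $v^\beta_j \in \{-b_j, -b_j+2, \dots, b_j\}$ for every $j$. The only subtlety is that a reduced vertex has many preimages in $Q_n$. The forward direction must hold for whichever edge $(\mathbf{v},\mathbf{w})$ witnesses the reduced edge. The reverse direction requires us to construct suitable preimages that are adjacent in $Q_n$.

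\emph{Forward direction.} Suppose $e^\beta=(\mathbf{v}^\beta,\mathbf{w}^\beta)$ comes from an edge $(\mathbf{v},\mathbf{w})$ of $Q_n$. Then $\mathbf{v}$ and $\mathbf{w}$ differ in exactly one coordinate $i$, with $w_i=-v_i$, so $w_i-v_i=\pm 2$. Let $j$ be the unique group with $i\in B_j$; uniqueness holds because the groups of $\beta$ partition $\{1,\dots,n\}$. For each $j'\neq j$, the sums $v^\beta_{j'}$ and $w^\beta_{j'}$ run over identical coordinates, so they are equal. In group $j$ we get $w^\beta_j-v^\beta_j = w_i-v_i=\pm 2$. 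Hence the two reduced vertices differ by exactly $2$ in exactly one coordinate.

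\emph{Reverse direction.} Suppose $\mathbf{v}^\beta$ and $\mathbf{w}^\beta$ agree in every coordinate except $j$, and $w^\beta_j=v^\beta_j+2$ (the other sign follows by symmetry). Since $v^\beta_j = 2p-b_j$, where $p$ is the number of $+1$ entries in block $B_j$, we have $p<b_j$, because $w^\beta_j\le b_j$. Construct a preimage $\mathbf{v}$ as follows: in each block $B_{j'}$, choose any vector of $\pm1$ entries with the prescribed sum, which is possible because each $v^\beta_{j'}$ is an admissible value. Block $B_j$ contains $p<b_j$ entries equal to $+1$, so it has at least one coordinate $i\in B_j$ with $v_i=-1$. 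Let $\mathbf{w}$ be $\mathbf{v}$ with coordinate $i$ flipped. Then $(\mathbf{v},\mathbf{w})\in E_n$, and by the computation in the forward direction, the reduced image of $\mathbf{w}$ is exactly $\mathbf{w}^\beta$. Thus $(\mathbf{v}^\beta,\mathbf{w}^\beta)$ is a reduced edge.

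The main obstacle is this existence step in the reverse direction. It needs both the surjectivity of the reduction onto the admissible values $\{-b_j,\dots,b_j\}$ (with step $2$) and the availability of a $-1$ coordinate to flip. Both follow from the counting observation made just after \Cref{def:reduced-vertex}. The rest is bookkeeping.
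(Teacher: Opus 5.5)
Your proof is correct, and its forward direction is the paper's computation: flipping coordinate $i$ changes only the sum of the block containing $i$, and changes it by $\pm 2$.

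The difference is in the converse. The paper's proof opens with ``$\mathbf{v}^\beta$ and $\mathbf{w}^\beta$ are connected by an edge if and only if $\mathbf{v}$ and $\mathbf{w}$ are adjacent'' and then checks only the forward implication. Read literally for arbitrary preimages, that opening sentence is false. For example, with $b_j=3$ the reduced values $-1$ and $1$ are connected, yet their preimages $(1,-1,-1)$ and $(-1,1,1)$ are not adjacent in $Q_n$. The paper never shows that two reduced vertices differing by $2$ in one coordinate actually have adjacent preimages.

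Your reverse direction supplies exactly this. You choose any preimage of $\mathbf{v}^\beta$, which exists because every block value is admissible. You observe that $w^\beta_j \le b_j$ forces at least one $-1$ entry in block $B_j$. Flipping that entry gives an adjacent preimage of $\mathbf{w}^\beta$.

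So the two arguments share the same core computation. The paper's version is shorter but leaves the ``if'' direction implicit, while yours is complete and correctly identifies the existence of a flippable $-1$ as the only non-routine step.
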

\begin{proof}
$\mathbf{v}^\beta$ and $\mathbf{w}^\beta$ are connected by an edge if and only if $\mathbf{v}$ and $\mathbf{w}$ are adjacent. Without loss of generality, assume they differ in the $i$-th coordinate. Observe that $|v_i - w_i| = 2$ as one must be $-1$ and the other $1$, and no other coordinate differs.
\begin{equation*}
\begin{split}
\left|v^\beta_{j}  - w^\beta_{j} \right| &=  \left|  \sum_{k\in {B}_j}v_{k} -  \sum_{k\in {B}_j}w_{k} \right|\\
&=  \left|  \left(\sum_{k\in {B}_j\setminus \{i\}} v_k\right) + v_i -  \left(\sum_{k\in {B}_j\setminus \{i\}}w_k\right) - w_i \right|\\
& = \left|v_i  - w_i \right|  = 2
\end{split}    
\end{equation*}
\end{proof} 
\begin{corollary}\label{cor:total-reduced-sum}
    The sum of coordinates of any two reduced vertices connected by an edge differ by exactly 2.
\end{corollary}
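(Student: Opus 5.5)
The plan is to derive \Cref{cor:total-reduced-sum} directly from the preceding theorem, which characterizes reduced edges.

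Let $(\mathbf{v}^\beta, \mathbf{w}^\beta)$ be a reduced edge. By the theorem, the two reduced vertices differ by exactly $2$ in one coordinate $j$ and agree in every other coordinate. So I would write the difference of coordinate sums as
\[
\sum_{m=1}^{\ell} v^\beta_m - \sum_{m=1}^{\ell} w^\beta_m = \sum_{m=1}^{\ell}\left(v^\beta_m - w^\beta_m\right) = v^\beta_j - w^\beta_j .
\]
All terms with $m\neq j$ vanish, so the absolute value of the difference is exactly $2$.

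An alternative that avoids relying on the ``differ in one coordinate'' reading is to observe that the total coordinate sum of a reduced vertex equals $\sum_{i=1}^n v_i$. This holds because the groups $B_1,\dots,B_\ell$ partition $\{1,\dots,n\}$. An edge of $Q_n$ flips exactly one coordinate $v_i$ from $\pm1$ to $\mp1$, so this sum changes by exactly $2$.

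The only point needing care is that ``differ by exactly 2 in one coordinate'' must be read as ``the remaining coordinates are equal.'' The theorem's proof justifies this reading, since only the group containing $i$ changes. I will take that as given, or else use the partition argument, which sidesteps it entirely. There is no real obstacle.
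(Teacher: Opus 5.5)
Your proposal is correct and matches the paper. The paper gives no separate proof of this corollary and treats it as an immediate consequence of the preceding theorem: a reduced edge changes exactly one block sum by $\pm 2$ and leaves the other block sums fixed (only the ``only if'' direction of the theorem is needed), so the total sums differ by exactly $2$. Your alternative argument is equally valid and sidesteps the reading issue you flagged: because the blocks partition the coordinates, the total reduced sum equals $\sum_{i=1}^n v_i$, and an edge of $Q_n$ changes this by exactly $2$.
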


\begin{definition}
    A reduced hypercube, $Q^{\beta}=(V^\beta, E^\beta)$, is the reduced representation of a hypercube $Q_n$ given a composition $\beta$, where $V^\beta$ is the set of all reduced vertices, and $E^\beta$ is a set of reduced edges between these vertices.
\end{definition}

We can easily find $|V^\beta|$ by applying Definition \ref{def:reduced-vertex} and the observation that follows:
\begin{equation}\label{eq:reduced-vertex-cardinality}
|V^\beta| = \prod_{i=1}^\ell |\{-{b}_i, -{b}_i+ 2, \cdots, {b}_i - 2, {b}_i\}|   = \prod_{i=1}^\ell ({b}_i+1).
\end{equation}

Edges (in $Q^{\beta}$) in direction (dimension) $i$ connect consecutive values in the $i$-th coordinate. There are $b_i$ such transitions (from $-b_i$ to $-b_i+2$, from $-b_i+2$ to $-b_i+4$, etc.). For each of the $\prod_{j \ne i}(b_j + 1)$ choices of the other coordinates, we get $b_i$ edges. Summing over all $\ell$ directions:
\[
|E^\beta| = \sum_{i=1}^\ell b_i \prod_{j \ne i}(b_j + 1).
\]
To obtain an alternative form, factor out $|V^\beta| = \prod_{j=1}^\ell(b_j+1)$ to arrive at $|V^\beta|(\ell-\sum_{i=1}^\ell\frac{1}{{b}_i +1})$.

\begin{theorem}\label{thm:reduced-dot}
    For a hyperplane $\mathbf{a}$ and vertex $\mathbf{v} \in Q_n$ satisfying a composition $\beta$, $\mathbf{a}^{\beta} \cdot \mathbf{v}^{\beta} = \mathbf{a} \cdot \mathbf{v}$
\end{theorem}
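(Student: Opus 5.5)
The plan is to expand the dot product $\mathbf{a}\cdot\mathbf{v}=\sum_{i=1}^n a_i v_i$ and regroup its terms according to the groups $B_1,\dots,B_\ell$ of the composition $\beta$. Since $\beta$ is a composition of $n$, the groups partition the index set $\{1,\dots,n\}$: they are consecutive blocks of sizes $b_1,\dots,b_\ell$ with $\sum_j b_j=n$. The first step is therefore the rearrangement
\[
\mathbf{a}\cdot\mathbf{v}=\sum_{j=1}^{\ell}\sum_{i\in B_j} a_i v_i .
\]
This uses only the fact that the $B_j$ are disjoint and cover all indices.

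The second step uses that $\mathbf{a}$ satisfies $\beta$. By Definition~\ref{def:reduced-hyperplane}, $a_i=a^\beta_j$ for every $i\in B_j$, so within each inner sum the coefficient is constant and factors out:
\[
\sum_{i\in B_j} a_i v_i = a^\beta_j\sum_{i\in B_j} v_i .
\]
By Definition~\ref{def:reduced-vertex} the remaining sum is exactly $v^\beta_j$. Summing over $j$ gives $\sum_{j=1}^{\ell} a^\beta_j v^\beta_j=\mathbf{a}^\beta\cdot\mathbf{v}^\beta$, which is the claim.

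No step is a real obstacle. The only point needing care is notational: the paper writes both $B_j$ and $\beta_j$ for the $j$-th group, and I will read these as the same block of indices. I will also note that the hypothesis ``satisfying a composition'' applies only to the hyperplane $\mathbf{a}$; every vertex $\mathbf{v}$ has a reduced representation, so no condition on $\mathbf{v}$ is needed. As a remark, the identity is linear in $\mathbf{v}$, so it extends to all real points, and hence a hyperplane with bias $b$ slices an edge of $Q_n$ exactly when its reduced hyperplane slices the corresponding reduced edge.
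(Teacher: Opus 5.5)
Your proposal is correct and follows the paper's proof: regroup $\sum_{i} a_i v_i$ by the blocks $B_j$, pull out the common coefficient $a^\beta_j$, and recognize each inner sum as $v^\beta_j$. The only difference is cosmetic. The paper carries a bias term $+b$ through every line of its computation, whereas you prove the plain dot-product identity and treat the bias separately in your closing remark on slicing.
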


\begin{proof}
\[ \mathbf{a} \cdot \mathbf{v} = \sum_{k=1}^n a_k v_k + b = \sum_{i=1}^\ell a^\beta_i \left(\sum_{j \in B_i} v_j\right) + b = \sum_{i=1}^\ell a^\beta_i \cdot v^\beta_i + b = \mathbf{a}^{\beta} \cdot \mathbf{v}^{\beta}
\]
\end{proof}

\begin{theorem}\label{thm:reduced-sliced-equiv}
A set of hyperplanes $A$ all satisfying $\beta$ slice all the edges of the hypercube $Q$ if and only if the set of reduced hyperplanes $A^\beta$ slices all the edges of the reduced hypercube $Q^\beta$.
\end{theorem}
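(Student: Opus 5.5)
The plan is to reduce the statement to a correspondence between edges, applied one edge and one hyperplane at a time.

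First, I make precise what it means for a reduced hyperplane $\mathbf{a}^\beta$ (with bias $b$) to slice a reduced edge $e^\beta=(\mathbf{v}^\beta,\mathbf{w}^\beta)$. It means $(\mathbf{a}^\beta\cdot\mathbf{v}^\beta-b)(\mathbf{a}^\beta\cdot\mathbf{w}^\beta-b)<0$, which mirrors the definition of slicing in $Q_n$. By Definition~\ref{def:reduced-edge}, the reduced edges are exactly the images of the original edges under the map $\mathbf{v}\mapsto\mathbf{v}^\beta$. So $E^\beta=\{e^\beta : e\in E_n\}$, and every reduced edge has at least one preimage edge.

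The key lemma is the following. For a hyperplane $\mathbf{a}$ satisfying $\beta$ and an edge $e=(\mathbf{v},\mathbf{w})$ of $Q_n$, $\mathbf{a}$ slices $e$ if and only if $\mathbf{a}^\beta$ slices $e^\beta$. This follows from Theorem~\ref{thm:reduced-dot}, which gives $\mathbf{a}\cdot\mathbf{v}=\mathbf{a}^\beta\cdot\mathbf{v}^\beta$ and $\mathbf{a}\cdot\mathbf{w}=\mathbf{a}^\beta\cdot\mathbf{w}^\beta$. Hence the two sign products defining slicing coincide. An important consequence is that whether $\mathbf{a}^\beta$ slices $e^\beta$ depends only on the endpoints $\mathbf{v}^\beta,\mathbf{w}^\beta$. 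Therefore, whenever two edges $e,e'$ of $Q_n$ have the same image, $\mathbf{a}$ slices $e$ if and only if $\mathbf{a}$ slices $e'$.

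With the lemma, both directions are short.

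($\Rightarrow$) Suppose $A$ slices every edge of $Q_n$, and take any reduced edge $f\in E^\beta$. Choose a preimage edge $e$ with $e^\beta=f$. Some $\mathbf{a}\in A$ slices $e$, so $\mathbf{a}^\beta\in A^\beta$ slices $f$ by the lemma.

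($\Leftarrow$) Suppose $A^\beta$ slices every reduced edge, and take any $e\in E_n$. Some $\mathbf{a}^\beta$ slices $e^\beta$. The reduced hyperplane must be read as coming from a specific $\mathbf{a}\in A$, with the reduction taken hyperplane-by-hyperplane. By the lemma, $\mathbf{a}$ then slices $e$.

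The main obstacle is bookkeeping rather than mathematics. Slicing of a reduced edge has to be defined intrinsically by its endpoints, so that it is independent of which preimage edge is chosen. All hyperplanes must also share the same composition $\beta$, with the bias carried along unchanged. The proof of Theorem~\ref{thm:reduced-dot} already includes the bias $b$ in both sides, so it transfers directly. I would state these conventions explicitly before running the two directions.
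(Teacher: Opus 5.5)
Your proposal is correct and takes essentially the same approach as the paper. The paper's proof also uses that every edge of $Q_n$ maps to a reduced edge and every reduced edge has a preimage (Definition~\ref{def:reduced-edge}), then applies Theorem~\ref{thm:reduced-dot}; you spell out the per-edge lemma and the two directions that the paper leaves implicit.
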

\begin{proof}
By Definition \ref{def:reduced-edge}, every edge of $Q_n$ maps to some edge in $E^\beta$ and every edge in $E^\beta$ is the image of at least one edge in $Q_n$. The result follows from Theorem \ref{thm:reduced-dot}.
\end{proof}

Theorem \ref{thm:reduced-sliced-equiv} establishes that the search for slicing sets can be restricted to the reduced hypercube. The reduced hypercube preserves the property that vertices of any edge differ in exactly one coordinate, allowing us to apply existing search techniques without modification.

\subsection*{Computing the Sliced Edges in the Reduced Hypercube}
%Given a composition and a set of sliced edges in the reduced hypercube
To map the number of edges sliced by a hyperplane on the reduced hypercube to the original hypercube, we need to know the \textit{number} of edges corresponding to a reduced edge. Let $\mathbf{\widehat v}^\beta$ represent the number of negative coordinates in each composition group, where
\begin{equation*}
    \widehat v^\beta_j = \sum_{i\in {b}_j}[v_i =-1].
\end{equation*}
Note that this representation still uniquely identifies each reduced vertex, as the number of $-1$s directly gives the sum.

Each reduced vertex represents several vertices in the hypercube. For a composition $\beta$ we can calculate the number of vertices which map to a reduced vertex, $\mathbf{v}^\beta$, by considering all vertices in which the sum of the coordinates in each composition group are equal. If the sums are equal, the number of negative coordinates is also equal, and if a coordinate is not negative it is positive. Therefore, we only need to count the number of ways of selecting negative coordinates in each composition group:
\begin{equation}\label{eq:num-reduced-vertices}
    |\mathbf{v}^\beta| = \prod_{i=1}^{\ell} \binom{{b}_i}{\widehat v^\beta_i}.
\end{equation}

Each reduced edge also corresponds to several edges in the hypercube, as several pairs of vertices on the hypercube will map to the same pair of reduced vertices.

\begin{definition}
For a reduced edge $e = (u^\beta, v^\beta) \in E^\beta$, its \emph{multiplicity} $\mu(e)$ is the number of edges in $Q_n$ that map to $e$. 
\end{definition}

For a reduced vertex $v^\beta$, let $\widehat{v}^\beta_j = (b_j - v^\beta_j)/2$ denote the number of $-1$ coordinates in block $j$ among its preimages.

\begin{theorem}
Let $e = (u^\beta, v^\beta) \in E^\beta$ with $u^\beta_i < v^\beta_i$. Then
\[
\mu(e) = \widehat{u}^\beta_i \prod_{j=1}^\ell \binom{b_j}{\widehat{u}^\beta_j}.
\]
\end{theorem}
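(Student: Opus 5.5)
The plan is to count directly the edges of $Q_n$ that map to $e$, orienting each such edge from its endpoint over $u^\beta$ to its endpoint over $v^\beta$. By the earlier theorem characterizing reduced edges, $u^\beta$ and $v^\beta$ agree in every coordinate except the $i$-th, where $v^\beta_i = u^\beta_i + 2$.

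The first step is to fix what an edge over $e$ looks like. Take an edge $(\mathbf{x},\mathbf{y})$ of $Q_n$ with $\mathbf{x}^\beta = u^\beta$ and $\mathbf{y}^\beta = v^\beta$. Then $\mathbf{x}$ and $\mathbf{y}$ differ in exactly one coordinate $k$. By the computation in the proof of that theorem, only the block containing $k$ changes its sum, so $k \in B_i$. Since the sum of block $i$ increases by $2$, we must have $x_k = -1$ and $y_k = 1$.

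Conversely, start from any preimage $\mathbf{x}$ of $u^\beta$ and any $k \in B_i$ with $x_k=-1$. Flipping $x_k$ gives a vertex $\mathbf{y}$ whose reduced representation agrees with $u^\beta$ except that coordinate $i$ is larger by $2$; hence $\mathbf{y}^\beta = v^\beta$ and $(\mathbf{x},\mathbf{y})$ maps to $e$. This gives a bijection between edges over $e$ and pairs $(\mathbf{x},k)$ with $\mathbf{x}^\beta = u^\beta$, $k\in B_i$ and $x_k=-1$. The bijection is well defined because the endpoint lying over $u^\beta$ is determined by the edge: the two endpoints have distinct reduced images. So no edge is counted twice, and no edge is reached from the $v^\beta$ side.

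The count then follows in two steps. By \eqref{eq:num-reduced-vertices}, the number of preimages $\mathbf{x}$ of $u^\beta$ is $\prod_{j=1}^\ell \binom{b_j}{\widehat u^\beta_j}$. For each such $\mathbf{x}$, block $i$ contains exactly $\widehat u^\beta_i$ coordinates equal to $-1$, so there are $\widehat u^\beta_i$ choices of $k$. Multiplying gives $\mu(e) = \widehat u^\beta_i \prod_{j=1}^\ell \binom{b_j}{\widehat u^\beta_j}$, as claimed. As a sanity check, $u^\beta_i < v^\beta_i \le b_i$ forces $\widehat u^\beta_i \ge 1$, so the formula is positive, matching the fact that every reduced edge has at least one preimage.

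I expect no real obstacle. The only point needing care is the bijection step: one must argue that the flipped coordinate lies in $B_i$ and goes from $-1$ to $+1$, and that each edge is counted exactly once via its orientation.
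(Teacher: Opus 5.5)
Your proposal is correct and follows the paper's argument: count the $\prod_{j=1}^\ell \binom{b_j}{\widehat u^\beta_j}$ preimages of $u^\beta$, and multiply by the $\widehat u^\beta_i$ coordinates in block $i$ of each preimage that can be flipped from $-1$ to $+1$. Your bijection, which orients each edge by its endpoint over $u^\beta$, spells out carefully the ``iff'' that the paper states in a single line.
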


\begin{proof}
An original edge maps to $e$ iff it flips one coordinate in block $i$ from $-1$ to $+1$. The number of preimages of $u^\beta$ is $\prod_j \binom{b_j}{\widehat{u}^\beta_j}$ (choosing which coordinates are $-1$ in each block). Each preimage has $\widehat{u}^\beta_i$ such coordinates to flip.
\end{proof}

\section{Our Tabu Search-Inspired Algorithm}
% \dr{Give the algorithm a name}
Prior to creating the final edge-weighted search, we experimented extensively with a tabu search-inspired algorithm (\Cref{alg:tabu_search}). Details for this algorithm are presented here.
Let $\mathcal S$ denote the set of feasible solutions, that is, hyperplanes with integer coefficients obeying the given composition (if any).
For each $S \in \mathcal S$, let $\phi(S)$ denote the set of edges cut by $S$ in the (reduced) hypercube.
We define a preorder $\succeq$ on $\mathcal S$ by
\[
S \succeq S' \iff |\phi(S)| \ge |\phi(S')|.
\]
Two solutions $S$ and $S'$ are considered equivalent under this order if $|\phi(S)|=|\phi(S')|$. Additionally, we define $\phi(\emptyset) =\emptyset$.

\begin{algorithm}[htb]
\caption{Our version of tabu search}\label{alg:tabu_search}
\begin{algorithmic}[1]
\State $S^* \gets \emptyset$
\While{time limit not exceeded}
\State $S \gets S_0$ \Comment{Random starting solution}
\State $E \gets \{h(S_0)\}$ \Comment{Set of all seen solutions, hashed}
\State $N \gets \{S_0$\}  \Comment{Ordered set of the best unexplored solutions}
\State $e \gets |E|$

\Statex{\hspace*{\algorithmicindent}Terminate when more than $R$ solutions are explored without improvement}
\While{$|E| - e < R$}
    \State $\widehat{S} \gets \arg\max_{S' \in N}|\phi(S')|$ \Comment{Get best unexplored solution}
    \State $N \gets N \setminus \{\widehat{S}\}$ \Comment{Remove solution from unexplored set}

    \For{$S' \in \text{Ham}_1(\widehat{S})$} \Comment{Iterate neighbors of Hamming distance 1}
        \If{$h(S') \notin E$}
            \State $E \gets E \cup \{h(S')\}$ \Comment{Mark as seen}
            \State $N \gets N \cup \{S'\}$ \Comment{Add to unexplored solutions}

            \If{$S' \succ S^*$}
                \State $S \gets S'$ \Comment{Track best solution}
                \State $e \gets |E|$
            \EndIf
        \EndIf
    \EndFor
\EndWhile
\If{$S \succ S^*$}
    \State $S^* \gets S$ \Comment{Track overall best solution}
\EndIf
\EndWhile

\end{algorithmic}
\end{algorithm}

We constrain our search to solutions with integer-valued planes with coefficients in $[-c, c]$, where $c$ is a hyperparameter. Further, we find that restricting the constant terms for all planes to $0$ (plus a small fractional offset to avoid passing through vertices) improves the search.

% hash might not be the best word for this...
To hash a solution, we fix an ordering of the edges of the (reduced) hypercube.
Given a solution $S$, each plane in $S$ is encoded as a binary vector whose $i$th
entry is $1$ if the plane slices the $i$th edge and $0$ otherwise.
Stacking these vectors yields a binary-valued matrix representation $h(S)$ of $S$,
which we store in the tabu set in place of $S$.
Empirically, using this representation instead of storing solutions directly
significantly improves search performance, as it prevents many effectively
equivalent solutions from being treated as distinct and unnecessarily explored.

We additionally find that searching neighbors only over a local Hamming distance, that is, searching neighbor plane sets where exactly one coefficient differs by not more than $d$ (a hyperparameter), is an important improvement to the tabu search algorithm that significantly reduces the time it takes to find solutions. We hypothesize that this is because, after several iterations of greedy refinement, large magnitude moves in coordinate space are exceedingly unlikely to be productive. Consequently, restricting the search to a smaller local neighborhood avoids unnecessary computation.

Random restarts are essential to avoid escaping local optima. The overall algorithm is inspired by \cite{mehrabian2023finding}.

% TODO: mention reduced hypercube, NOT forcing all planes to have same first columns

\FloatBarrier
\section{Simple Verification Algorithm}\label{appendix:verification-algorithm}
The following algorithm provides a non-optimized but simple way of verifying whether a collection of planes slice all edges of $Q_n$. A simple, commented version of this algorithm is provided at \url{github.com/DSoiffer/upper-bounds-for-hypercube-slicing}. A web-based version for inspecting and verifying solutions more closely is provided at \url{hypercube-slicing.pages.dev}.
\begin{algorithm}[h]
\caption{Simple solution verification}
\begin{algorithmic}[1]
\Require Hyperplanes $\mathcal{H}=\{(a_j,b_j)\}_{j=1}^k$
\For{$i=1$ to $n$}
    \For{each $x\in\{-1,1\}^n$ with $x_i=-1$}
        \State $x^{(i)} \gets (x_1,\dots,x_{i-1},-x_i,x_{i+1},\dots,x_n)$
        \State sliced $\gets$ \textbf{false}
        \For{$j=1$ to $k$}
            \If{$(\langle a_j,x\rangle-b_j)(\langle a_j,x^{(i)}\rangle-b_j) < 0$}
                \State sliced $\gets$ \textbf{true}
                \State \textbf{break}
            \EndIf
        \EndFor
        \If{\textbf{not} sliced}
            \State \Return \textbf{false}
        \EndIf
    \EndFor
\EndFor
\State \Return \textbf{true}
\end{algorithmic}
\end{algorithm}

\FloatBarrier
\section{Additional Tables and Prior Bounds}
Previous work has searched for the value of of $S(n)$ and $S(n,k)$ for $n\leq8$, including an exhaustive search for $n\leq6$ \cite{newbounds, emamy2008coverings}. Mike Paterson showed $S(n) \leq \lceil\frac{5n}{6} \rceil$ with the construction in Equation \ref{eq:56planes} \cite{saks1993slicing}. \Cref{tab:previous-bounds} summarizes these results.

\begin{table}[h]
\centering
\caption{Best known lower bounds on $S(n,k)$ prior to our work.}\label{tab:previous-bounds}
\begin{tabular}{|c|cccccccc|}
\hline
$S(n,k)$ & $k=1$ & $k=2$ & $k=3$ & $k=4$ & $k=5$ & $k=6$ & $k=7$ & $k=8$ \\
\hline
$n=3$ & 6 & 10 & 12 & -- & -- & -- & -- & --  \\
$n=4$ & 12 & 24 & 30 & 32 &  & -- & -- & --  \\
$n=5$ & 30 & 54 & 70 & 78 & 80 & -- & -- & -- \\
$n=6$ & 60 & 120 & 160 & 184 & 192 & 192 & -- & --  \\
$n=7$ & 140 & 260 & 350 & 410 & 434 & 448 & 448 & --  \\
$n=8$ & 280 & 560 & 770 & 908 & 980 & 1008 & 1024 & 1024 \\
\hline
\end{tabular}
\end{table}

We also present the previous best values achieved by our manually constructed tabu search in \Cref{appendix:tab-tabu-bounds}. Note that these are \emph{not} our best results, but are presented for sake of comparison.

\begin{table}[h]
\centering
\caption{Lower bounds on $S(n,k)$ discovered by tabu search. Each bounds follows from a set of $k$ hyperplanes slicing the reported number of edges in $Q_n$. Bold numbers indicate improvements over previous best-known values, underlined numbers indicates the maximum possible value (full slicing).}\label{appendix:tab-tabu-bounds}
\begin{tabular}{|c|cccccccc|}
\hline
$S(n,k)$ & $k=4$ & $k=5$ & $k=6$ & $k=7$ & $k=8$ & $k=9$ & $k=10$ & $k=11$ \\
\hline
$n=5$  & \underline{78}   & \underline{80}   & --   & --   & --   & --   & --    & --    \\
$n=6$  & 184  & \underline{192}  & \underline{192}  & --   & --   & --   & --    & --    \\
$n=7$  & 410  & \textbf{440}  & \underline{448}  & \underline{448}  & --   & --   & --    & --    \\
$n=8$  & \textbf{920}  & 980  & \textbf{1016} & \underline{1024} & \underline{1024} & --   & --    & --    \\
$n=9$  & \textbf{1974} & \textbf{2184} & \textbf{2254} & \textbf{2298} & \underline{2304} & \underline{2304} & --    & --    \\
$n=10$ & \textbf{4312} & \textbf{4704} & \textbf{4984} & \textbf{5064} & \textbf{5114} & \underline{5120} & \underline{5120}  & --    \\
$n=11$ & \textbf{9072} & \textbf{10052} & \textbf{10536} & \textbf{10844} & \textbf{11042} & \textbf{11258} & \underline{11264} & \underline{11264} \\
\hline
\end{tabular}
\end{table}

\begin{table}[h]
\centering
\caption{Additional values for larger $(n,k)$ discovered by tabu search.}
\begin{tabular}{|c|c|c|l|}
\hline
$n$ & $k$ & Best found & Max edges \\
\hline
13 & 10 & \textbf{53008}   & \underline{53248} \\
14 & 11 & \textbf{114286} & \underline{114688} \\
15 & 12 & \textbf{245252}  & \underline{245760} \\
16 & 13 & \textbf{523430}  & \underline{524288} \\
17 & 14 & \textbf{1114088} & \underline{1114112} \\
\hline
\end{tabular}
\end{table}